\let\oldtexttt\texttt
\renewcommand{\texttt}[1]{{\small{\oldtexttt{#1}}}}
\newcommand{\essence}[0]{\textsc{Essence}\xspace}
\newcommand{\conjure}[0]{\textsc{Conjure}\xspace}
\newcommand{\eprime}[0]{\textsc{Essence Prime}\xspace}
\newcommand{\savilerow}[0]{\textsc{Savile Row}\xspace}
\newcommand{\code}{\texttt}
\newcommand{\TITLE}{Towards Reformulating Essence Specifications for Robustness}
\title{\TITLE}
\newcommand{\AUTHOR}{Özgür Akgün, Alan M. Frisch, Ian P. Gent, Christopher Jefferson, Ian Miguel, Peter Nightingale, András Z. Salamon}
\institute{
    University of St Andrews. \\ \{ozgur.akgun, ian.gent, caj21, ijm, Andras.Salamon\}@st-andrews.ac.uk \and
    University of York. \{alan.frisch, peter.nightingale\}@york.ac.uk
}
\author{
    \"Ozg\"ur Akg\"un\inst{1} \and
    Alan M. Frisch\inst{2} \and
    Ian P. Gent\inst{1} \and
    Christopher Jefferson\inst{1} \and \\
    Ian Miguel\inst{1} \and
    Peter Nightingale\inst{2} \and
    Andr\'as Z. Salamon\inst{1}
}
\institute{
    University of St Andrews. \\ \{ozgur.akgun, ian.gent, caj21, ijm, Andras.Salamon\}@st-andrews.ac.uk \and
    University of York. \{alan.frisch, peter.nightingale\}@york.ac.uk
}
\begin{document}

\maketitle

\begin{abstract}

The  \essence{} language  allows a user to specify a constraint problem at a level of abstraction above that at which constraint modelling decisions are made.
\essence{} specifications are refined into constraint models using the \conjure{} automated modelling tool, which employs a suite of refinement rules. 
However, \essence{} is a rich language in which there are many equivalent ways to specify a given problem.
A user may therefore omit the use of domain attributes or abstract types, resulting in fewer refinement rules being applicable and therefore a reduced set of output models from which to select.
This paper addresses the problem of recovering this information automatically to increase the robustness of the quality of the output constraint models in the face of variation in the input \essence{} specification.
We present reformulation rules that can change the type of a decision variable or add attributes that shrink its domain.
We demonstrate the efficacy of this approach in terms of the quantity and quality of models \conjure{} can produce from the transformed specification compared with the original.

\end{abstract}

\section{Introduction and Background}

The {\em modelling bottleneck} is the difficulty of formulating a problem of interest as a constraint model suitable for input to a constraint solver. 
The space of possible models for a given problem is typically large, and the model selected can have a dramatic effect on the efficiency of constraint solving. 
This presents a serious challenge for the inexpert user, who has difficulty in formulating a good (or even correct) model, and motivates efforts to automate constraint modelling.
In this paper we show that one source of difficulty to inexpert users can be ameliorated by \textit{type strengthening}  rules.

In this paper our focus is on the refinement-based approach, where a user writes {\em abstract} constraint specifications that describe a problem above the level at which constraint modelling decisions are made. Abstract constraint specification languages, such as \essence{} or Zinc, support abstract decision variables with types such as set, multiset, relation and function, as well as {\em nested} types, such as set of sets and multiset of relations. Problems can typically be specified very concisely in this way, as demonstrated by the examples in \Cref{essence-rel}. However, existing constraint solvers do not support these abstract decision variables directly, so abstract constraint specifications must be {\em refined} into concrete constraint models.

Work on automation of aspects of constraint modelling
can be grouped into distinct categories.
Models can be learned from positive or negative examples~\cite{de2018learning,Bessiere2017constraint,arcangioli2016multiple},
various kinds of queries~\cite{beldiceanu_model_2012mod,bessiere2013constraint,bessiere2014boosting},
arguments~\cite{argumentationConstraintAcquistion},
or from natural language descriptions~\cite{kiziltan2016constraint}.
It is possible to partially automate the  transformation of medium-level solver-independent constraint models~\cite{rendl2010effective,nethercote2007minizinc,vanhentenryck1999opl,mills1999eacl,nightingale2014automatically,nightingale2017automatically,nightingale2015automatically,little2003using}.
Closer to our work,
implied constraints have been derived from a constraint model~\cite{frisch2003cgrass,colton2001constraint,charnley2006automatic,bessiere2007learningmod,leo2013globalizingmod} and
refinement of abstract constraint specifications has been considered~\cite{frisch2005essence} using the languages ESRA~\cite{flener2003esra}, \essence{}~\cite{frisch2013:essence}, ${\mathcal F}$~\cite{hnich2003thesis} and Zinc \cite{marriott2008design,koninck2010data,rafeh2016linzinc}. 

We here use \essence{} \cite{frisch2013:essence} as our abstract problem specification language.
\essence supports abstract decision variables with \emph{types} such as \code{set}, \code{mset} (denoting a multiset), \code{relation} and \code{function}, as well as \emph{nested} types, such as \code{set of set} and \code{mset of relation}.
Types are used to construct \emph{domains}, which are abstract collections of objects of some common type but with possibly additional structure indicated by means of \emph{domain attributes}.
Problems can typically be specified very concisely in this way.
Abstract constraint specifications must be {\em refined} into concrete constraint models for existing constraint solvers. Our \conjure{} system \cite{akgun2011extensible,akgun2013automated,akgun2014breaking} employs refinement rules to convert  an \essence{} specification into the solver-independent constraint modelling language \eprime{} \cite{rendl2010effective}. From \eprime{} we use \savilerow{} \cite{nightingale2017automatically}
to translate the model into input for a particular constraint solver while performing solver-specific model optimisations.

\essence{} is a rich language in which there are many equivalent ways to specify a given problem. It is possible, therefore, for a user to avoid the use of domain attributes or abstract types, resulting in fewer refinement rules being applicable and therefore a reduced set of output models from which to select. This paper addresses the problem of recovering this information automatically and hence increasing the robustness of the quality of the output constraint models.
We present reformulation rules that can change the type of a decision variable or add attributes that shrink its domain. We demonstrate the efficacy of this approach in terms of the quantity and quality of models \conjure{} can produce from the transformed specification compared with the original.
All the methods described in this paper are implemented as part of the \conjure{} system.\footnote{An archive containing several examples of robustness transformations performed by \conjure can be found at the following repository: \\ \url{https://github.com/stacs-cp/ModRef2021-robustness}}

\section{Motivating Examples}

\essence is a rich language with a wide range of type constructors and domain attributes. 
\conjure uses type constructors and domain attributes when selecting from its library of representations. 
\conjure has special highly efficient representations for many specific domains. For example, sets of fixed size and total functions. These are dramatically more efficient during constraint solving than the representations for variable-size sets and partial functions. Hence it is absolutely vital that the type constructor and domain attributes are as specific as possible. In this section we will give some concrete examples of how this can be achieved. 

In \essence, a \emph{domain attribute} (denoted in brackets after the name of a type in a domain) specialises the domain. 
For example, a function can be \texttt{surjective}, \texttt{injective} or \texttt{total} (\essence functions are partial by default). The specification given in Figure \ref{fig:funtypesA} defines a partial surjective function. A partial surjective function from a set \(S\) to a set \(T\) where \(|S| = |T|\) must necessarily be a total bijective function. Therefore we can strengthen this function to be \texttt{function (bijective, total) Index --> Index}. This has not changed the number of values in the domain, but \conjure has specialised efficient representations for \texttt{total} functions, so this change can lead to dramatically better performance during constraint solving. 

%functions

\newsavebox{\funtypesA}
\begin{lrbox}{\funtypesA}
\begin{minipage}{\textwidth}
\begin{lstlisting}
given n : int(1..)
letting Index be domain int(1..n)
find arrangement : function (surjective) Index --> Index
\end{lstlisting}
\end{minipage}
\end{lrbox}

\newsavebox{\funtypesB}
\begin{lrbox}{\funtypesB}
\begin{minipage}{\textwidth}
\begin{lstlisting}
given n : int(1..)
letting Index be domain int(1..n)
find arrangement : function (total, bijective) Index --> Index
\end{lstlisting}
\end{minipage}
\end{lrbox}

% relations

\newsavebox{\relA}
\begin{lrbox}{\relA}
\begin{minipage}{\textwidth}
\begin{lstlisting}
find x : relation of (int(1..3) * int(4..6) * int(7..9))
such that forAll i : int(1..3) . forAll k : int(7..9) . x(i,_,k) = {$\epsilon$}
\end{lstlisting}
\end{minipage}
\end{lrbox}

\newsavebox{\relB}
\begin{lrbox}{\relB}
\begin{minipage}{\textwidth}
\begin{lstlisting}
find x: function (total) (int(1..3), int(7..9)) --> int(4..6)
such that forAll i : int(1..3) . forAll k : int(7..9) . toRelation(x)(i,k,_) = {$\epsilon$}
\end{lstlisting}
\end{minipage}
\end{lrbox}

\newsavebox{\relC}
\begin{lrbox}{\relC}
\begin{minipage}{\textwidth}
\begin{lstlisting}
find x: function (total) (int(1..3), int(7..9)) --> int(4..6)
such that forAll i : int(1..3) . forAll k : int(7..9) . x(i,k) = $\epsilon$
\end{lstlisting}
\end{minipage}
\end{lrbox}

\begin{figure}
\subfloat[An \essence specification using \texttt{function} domains, before domain strengthening\label{fig:funtypesA}]{ \usebox{\funtypesA} } \\
\subfloat[An \essence specification using \texttt{function} domains, after domain strengthening\label{fig:funtypesB}]{ \usebox{\funtypesB} }

\subfloat[Input \essence{} specification with a \texttt{relation} decision variable\label{fig:relA}]{ \usebox{\relA} } \\
\subfloat[Reformulating the above to a \texttt{function} domain and the \texttt{toRelation} operator\label{fig:relB}]{ \usebox{\relB} } \\
\subfloat[Reformulating the above to remove the \texttt{toRelation} operator\label{fig:relC}]{ \usebox{\relC} }

\caption{\essence{} domain and representation rule examples}
\label{essence-rel}\label{fig:funrules}
\end{figure}

The type of a domain can also make a dramatic difference to the quality of models produced by \conjure. Our second class of reformulations identify when a type may be replaced by a more specific type (for example, replacing a multiset with a set). 

The \essence specification in Figure~\ref{fig:relA} contains a relation variable that would be better posed as a function. For each assignment to the first and third value in the relation, the relation is satisfied by exactly one assignment to the middle index. In this example, $\epsilon$ can be an arbitrary \essence{} expression possibly involving other decision variables. The second index is functionally defined by the first and third indices, so we can replace the relation with a function as shown in Figure~\ref{fig:relB}.  To simplify the process of changing the relation to a function, we have replaced all occurrences of the relation \texttt{x} in the constraints with \texttt{toRelation(x)}. The \texttt{toRelation} operator maps the function back into a relation, and allows us to be sure we can transform any constraint involving the relation \texttt{x} into a constraint on the function \texttt{x}. We also reordered the indices of the relation to place the functionally defined index (or indices) at the end. 

Figure~\ref{fig:relB} has reduced the domain size of the variable in our specification from \(2^{3^3}=134,217,728\) to \(3^9 = 19,683\). Reducing the size of a domain strongly indicates (but does not guarantee) that the specification has been improved, because we have a much smaller domain to represent and search in our constraint solver. The final step in the reformulation is removing the \texttt{toRelation} operators where possible. This results in the final specification given in Figure~\ref{fig:relC}.

\section{Domain Strengthening via Attribute Recovery}

Refinement rules in \essence rewrite an abstract structure into a more concrete equivalent.
Domain attributes provide additional information that can be used for more effective refinement.
This information can also be provided in an \essence{} specification in the form of constraints.
However, \conjure uses only the domain and attributes of a variable to select its representation, so using constraints instead of domain attributes reduces the quantity and quality of models output by \conjure.
In this section we describe how an \essence{} specification can be reformulated to recover latent attribute information that was omitted from the specification.
There exists a strengthening rule to recover each attribute in \essence{}, as we will demonstrate below.

A \emph{strengthening rule} takes as input a domain and optionally a constraint expression and outputs a new domain.
In addition, it indicates whether the input constraint should remain (by default it is removed).
For some rules the entire constraint is subsumed by the addition of the attribute so the constraint is not required, in others the constraint is still required.
The strengthening rules are applied by \conjure{} before the representation of each variable is chosen.
The strengthening rules presented in this paper take a similar form to the \conjure rules presented previously \cite{akgun2011extensible}. Each rule takes as input a single decision variable and one or more constraints, and outputs a new modified variable and one or more constraints. Each rule removes the constraints matched by the input of the rule unless we state otherwise.  All other constraints involving the variable are unchanged, except referring to the new name for the variable.

\conjure rules contain meta-variables to denote expressions which must be matched, such as the \texttt{\&n} in \Cref{set-size-rule}. These meta-variables can match an arbitrary \essence expression, not just a single identifier. This means our rules can match large \essence expressions with simple patterns. Because the identifier \texttt{\&n} in \Cref{set-size-rule} is also used in the \texttt{size} attribute of a \texttt{set}, it must be a constant or parameter, not a decision variable. The rule matcher will automatically reject any \texttt{\&n} which does not meet this requirement. Other identifiers, such as \texttt{\&exp} in Figure \ref{function-total}, will match with any \essence expression at all, including expressions involving decision variables.

General \conjure rewrite rules (such as a rule which normalises \(A > B\) to \(B < A\)) can match partial expressions contained within constraints. Strengthening rules must only be applied to top-level constraints. 
In \essence abstract domain constructors can be nested arbitrarily, and attributes can be recovered for domains nested inside other domain constructors.
This is done by \conjure for all rules automatically, and does not require writers of rules to worry about deeply nested types.
The example given in \Cref{fig:DS_nested} shows an example where we place a constraint on all the \texttt{mset} members of S.
\conjure automatically recognises the \texttt{forAll m in S} quantifier imposes the constraint on all members of \texttt{S}, and so the rule in \Cref{mset-maxOccur-rule} which is designed for \texttt{mset} variables will trigger, adding to the members of \texttt{S} the attribute \texttt{(maxOccur 3)}, producing the output in \Cref{fig:DS_nested_out}.

\subsection{Recovering Size and Occurrence Related Attributes}\label{sec:recoversize}

In \essence{} abstract domain constructors --- \texttt{set}, \texttt{mset}, \texttt{function}, \texttt{relation}, and \texttt{partition} --- can have \texttt{size}, \texttt{minSize}, and \texttt{maxSize} attributes.
\conjure{} can recover these attributes when a cardinality constraint is posted at the top level in the problem specification.
For example, a constraint of the form \texttt{|x| = n} implies the recovery of a \texttt{size} attribute for \texttt{x};
the constraints \texttt{|x| < n} and \texttt{|x| <= n} imply a \texttt{maxSize} attribute with the values \texttt{n-1} and \texttt{n} respectively.
The recovery of the \texttt{minSize} attribute is handled similarly.

In addition to the three \emph{size} attributes common to all abstract domain constructors, \texttt{mset} and \texttt{partition} have additional attributes which are handled similarly.
For \texttt{mset}, the attributes \texttt{minOccur} and \texttt{maxOccur} constrain the number of occurrences of individual values.
For \texttt{partition} the attributes \texttt{partSize}, \texttt{minPartSize}, \texttt{maxPartSize} constrain the sizes of the parts in the partition, and \texttt{numParts}, \texttt{minNumParts} constrain the number.

\newsavebox{\setmsetin}
\begin{lrbox}{\setmsetin}
\begin{minipage}{\textwidth}
\begin{lstlisting}
find S : set of mset of int(0..9)
such that
   |S| = 2,
   forAll m in S . forAll i : int(0..9) . freq(m,i) <= 3
\end{lstlisting}
\end{minipage}
\end{lrbox}

\newsavebox{\setmsetout}
\begin{lrbox}{\setmsetout}
\begin{minipage}{\textwidth}
\begin{lstlisting}
find S: set (size 2) of mset (maxOccur 3) of int(0..9)
\end{lstlisting}
\end{minipage}
\end{lrbox}

\begin{figure}
\subfloat[The input\label{fig:DS_nested} \essence{} problem specification]{ \usebox{\setmsetin} } \\
\subfloat[Recovered attributes\label{fig:DS_nested_out}]{ \usebox{\setmsetout} }
\caption{An example \essence{} specification with nested domains.}
\label{set-mset-example}
\label{essence-mset}

\subfloat[Recovering the \texttt{size} attribute for sets\label{set-size-rule}]{
\begin{minipage}{\textwidth}
    \begin{mdframed}
    \begin{description}[leftmargin=!,labelwidth=\widthof{\textbf{AAAAA}}]
        \item[Input   ] \lstinline$find &x : set of &T$
        \item[Input   ] \lstinline$|&x| = &n$
        \item[Output  ] \lstinline$find &x : set (size &n) of &T$
    \end{description}
    \end{mdframed}
\end{minipage}
 } \\
\subfloat[Recovering the \texttt{maxOccur} attribute for multi-sets\label{mset-maxOccur-rule}]{
\begin{minipage}{\textwidth}
    \begin{mdframed}
    \begin{description}[leftmargin=!,labelwidth=\widthof{\textbf{AAAAA}}]
        \item[Input   ] \lstinline$find &x : mset of &T$
        \item[Input   ] \lstinline$forAll &i : &T . freq(&x,&i) <= &n$
        \item[Output  ] \lstinline$find &x : mset (maxOccur &n) of &T$
    \end{description}
    \end{mdframed}
\end{minipage}
}
\caption{Rules for the domains in Figure~\protect\ref{set-mset-example}.}
\label{set-mset-example-rules}
\end{figure}
\Cref{set-size-rule} presents a recovery of the \texttt{size} attribute for sets.
This rule as given here is specialised to just the \texttt{set} type constructor, however in \conjure{} this rule is implemented to handle any abstract domain constructor.
\Cref{mset-maxOccur-rule} presents a recovery of the \texttt{maxOccur} attribute for \texttt{mset}.
This rule, as well as all other strengthening rules also work when the \texttt{mset} domain is nested inside other abstract domain constructors. \Cref{fig:DS_nested} presents an example of a nested type where we recover attributes for both the inner and outer type constructors.
In \Cref{fig:DS_nested}, the domain given of \texttt{S} is infinite.
While \conjure requires all variables have a finite domain, the check for finiteness is performed after type-strengthening.
This shows another way in which type strengthening can help users model their problems more easily.

\subsection{Recovering Special \texttt{function} and \texttt{sequence} Attributes}

In addition to the common \texttt{minSize}, \texttt{maxSize} and \texttt{size} attributes,
functions have four additional attributes: \texttt{total}, \texttt{injective}, \texttt{surjective}, and \texttt{bijective}.
\Cref{function-total} gives a strengthening rule, where the \texttt{total} attribute is inferred if there is a constraint to assign values to all mappings in the function.
Here, unlike most other strengthening rules, the constraint is not removed because it contains more information than just representing a \texttt{total} attribute.
\begin{figure}
\subfloat[Recovering the \texttt{total} attribute for functions\label{function-total}]{
\begin{minipage}{\textwidth}
    \begin{mdframed}
    \begin{description}[leftmargin=!,labelwidth=\widthof{\textbf{AAAAA}}]
        \item[Input   ] \lstinline$find &x : function &T_1 --> &T_2$
        \item[Input   ] \lstinline$forAll &i : &T_1 . &x(&i) = &exp$
        \item[Output  ] \lstinline$find &x : function (total) &T_1 --> &T_2$\newline
            \emph{The constraint remains unchanged.}

    \end{description}
    \end{mdframed}
\end{minipage}
 } \\
\subfloat[Recovering the \texttt{surjective} attribute for functions\label{function-surjective}]{
\begin{minipage}{\textwidth}
    \begin{mdframed}
    \begin{description}[leftmargin=!,labelwidth=\widthof{\textbf{AAAAA}}]
        \item[Input   ] \lstinline$find &x : function &T_1 --> &T_2$
        \item[Input   ] \lstinline$forAll &j : &T_2 . exists &i : &T_1 . f(&i) = &j$
        \item[Output  ] \lstinline$find &x : function (surjective) &T_1 --> &T_2$
    \end{description}
    \end{mdframed}
\end{minipage}
 } \\
\subfloat[Recovering the \texttt{injective} attribute for functions\label{function-injective}]{
\begin{minipage}{\textwidth}
    \begin{mdframed}
    \begin{description}[leftmargin=!,labelwidth=\widthof{\textbf{AAAAA}}]
        \item[Input   ] \lstinline$find &x : function &T_1 --> &T_2$
        \item[Input   ] \lstinline$forAll &i,&j : &T_1 . &i != &j -> &x(&i) != &x(&j)$
        \item[Output  ] \lstinline$find &x : function (injective) &T_1 --> &T_2$
    \end{description}
    \end{mdframed}
\end{minipage}
}
\caption{Strengthening rules for attributes of function domains}
\end{figure}
\Cref{function-surjective} gives a strengthening rule, where the \texttt{surjective} attribute is inferred if there is a constraint stating for all values in the range of the function there is a mapping.
Similarly, \Cref{function-injective} gives a strengthening rule where the \texttt{injective} attribute is inferred if there is a constraint stating the image of the function to be distinct for distinct values.
In both of these rules, the constraints are removed from the model because they are subsumed by adding the suitable attribute to \texttt{x}. \conjure will further infer the \texttt{bijective} attribute for any variable with both the \texttt{injective} and \texttt{surjective} attributes.

Sequence domains support the \texttt{injective}, \texttt{surjective} and \texttt{bijective} attributes as well. These are handled similarly to functions.

\subsection{Recovering Special \texttt{relation} Attributes}\label{ref:relationDS}

\Cref{relation-functional} gives strengthening rules to infer \texttt{functional} and \texttt{total\_functional} attributes.
These two attributes restrict the domain of a relation so some columns of a relation are functionally determined by the rest of the columns.
\texttt{functional} restricts the functionally determined columns take at most one assignment for each assignment to the other columns, \texttt{total\_fuctional} restricts the functionally defined columns to take exactly one assignment.
For example, a binary relation \lstinline$r$ together with the constraint \lstinline$forAll i : dom . |r(i,_)| = 1$ can be turned into a function mapping values from the first column to the second one.
In addition, such a function domain has to be \texttt{total}, because there is exactly one value of the second column for each value of the first.
The constraint 
\lstinline$forAll i : dom . |r(i,_)| <= 1$ would let \conjure{} recover a \texttt{functional} attribute instead of \texttt{total\_functional}.

\begin{figure}
\subfloat{
\begin{minipage}{\textwidth}
    \begin{mdframed}
    \begin{description}[leftmargin=!,labelwidth=\widthof{\textbf{AAAAA}}]
        \item[Input   ] \lstinline$find &x : relation of (&T_1 * &T_2 * &T_3)$
        \item[Input   ] \lstinline$|&x(&a,&b,_)| <= 1$
        \item[Output  ] \lstinline$find &x : relation (functional (1,2)) of (&T_1 * &T_2 * &T_3)$
    \end{description}
    \end{mdframed}
\end{minipage}
 } \\
\subfloat{
\begin{minipage}{\textwidth}
    \begin{mdframed}
    \begin{description}[leftmargin=!,labelwidth=\widthof{\textbf{AAAAA}}]
        \item[Input   ] \lstinline$find &x : relation of (&T_1 * &T_2 * &T_3)$
        \item[Input   ] \lstinline$|x(&a,&b,_)| = 1$
        \item[Output  ] \lstinline$find &x : relation (total_functional (1,2)) of (&T_1 * &T_2 * &T_3)$
    \end{description}
    \end{mdframed}
\end{minipage}
 } \\
\subfloat{
\begin{minipage}{\textwidth}
    \begin{mdframed}
    \begin{description}[leftmargin=!,labelwidth=\widthof{\textbf{AAAAA}}]
        \item[Input   ] \lstinline$find &x : relation of (&T_1 * &T_2 * &T_3)$
        \item[Input   ] \lstinline$&x(&a,&b,_) = {&c}$
        \item[Output  ] \lstinline$find &x : relation (total_functional (1,2)) of (&T_1 * &T_2 * &T_3)$\newline
                    \emph{The constraint remains unchanged.}

    \end{description}
    \end{mdframed}
\end{minipage}
}
\caption{Strengthening rules for attributes of relation domains\label{relation-functional}}
\end{figure}
\begin{figure}
\begin{minipage}{\textwidth}
    \begin{mdframed}
    \begin{description}[leftmargin=!,labelwidth=\widthof{\textbf{AAAAA}}]
        \item[Input   ] \lstinline$find &x : partition from &T$
        \item[Input   ] \lstinline$forAll &i,&j in parts(&x) . |&i| = |&j|$
        \item[Output  ] \lstinline$find &x : partition (regular) from &T$
    \end{description}
    \end{mdframed}
\end{minipage}
\caption{Strengthening rules for attributes of partition domains\label{partition-rules}}
\end{figure}

\subsection{Recovering Special \texttt{partition} Attributes}

\Cref{partition-rules} gives strengthening rules for the \texttt{partition} type constructor. In the first the \texttt{regular} attribute is inferred if there is a constraint forcing all parts in the partition to have the same cardinality.
As discussed earlier in \Cref{sec:recoversize}, \conjure also recovers the \texttt{numParts} and \texttt{partSize} attributes for \texttt{partition}.

\subsection{Domain-Only Recovery}

It is sometimes possible to recover domain attributes just from the domain of a variable.
For example in \Cref{fig:funtypesA}, a \texttt{surjective} function between two domains of equal size must be \texttt{total} and \texttt{bijective}.
\conjure contains a range of similar rules for other types. For example for a \texttt{mset} domain \texttt{maxSize n} implies \texttt{maxOccur n} and \texttt{minOccur n} implies \texttt{minSize n}.
Futher, for partitions if the \texttt{partSize} multiplied by the \texttt{numParts} is equal to the size of the set the partition is defined over, the partition is \texttt{complete} and \texttt{regular}.
While these modifications do not remove any constraints, they allow \conjure to choose from a wider range of representations, as representation selection is based upon the domain and attributes.

\subsection{Limitations of Attribute Recovery}

Each of \conjure's attribute recovery rules match an explicit pattern of constraints and attributes.
The reformulation rules in \conjure can reduce other constraints to fit into these patterns, but there are occasions where we fail to detect attributes.
Theorem \ref{thm:halting} shows this is inevitable, as detecting if we can add an attribute to a variable is equivalent to the halting problem.

\begin{theorem}\label{thm:halting}
For any \essence attribute $A$, suppose there is an oracle that can decide, for any \essence specification $S$ containing variable \(V\), whether \(V\) can have the attribute \(A\) attached to it without removing solutions. 
Then this oracle can be used to solve the halting problem, unless \(A\) is satisfied by every variable it can be attached to.
\end{theorem}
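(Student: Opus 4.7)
The proof strategy is a Rice's-theorem-style reduction from the halting problem to the attribute-recovery oracle. Since $A$ is not satisfied by every variable it can be attached to, fix once and for all a domain $D$ that admits $A$ together with a concrete value $v^\ast \in D$ that witnesses violation of $A$. The plan is then, given a Turing machine $M$ and input $w$, to build an \essence{} specification $S_{M,w}$ containing a find variable $V$ of domain $D$ such that the single oracle call ``can $A$ be attached to $V$ in $S_{M,w}$?'' answers exactly the negation of halting on $(M,w)$.

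For the encoding I would invoke Matijasevich's theorem to obtain an integer polynomial $P_{M,w}(x_1,\dots,x_k)$ whose zero set is nonempty iff $M$ halts on $w$. The specification $S_{M,w}$ then declares find variables $x_1,\dots,x_k$ of unbounded integer type together with $V$ of domain $D$, and posts the two constraints $P_{M,w}(x_1,\dots,x_k)=0$ and $V = v^\ast$. If $M$ halts, the Diophantine system is satisfiable, every solution pins $V$ to $v^\ast$, and attaching $A$ destroys those solutions, so the oracle must answer ``no''; if $M$ does not halt, $S_{M,w}$ has no solutions at all, attaching $A$ vacuously removes nothing, and the oracle must answer ``yes''. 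Thus a single oracle call decides halting, contradicting Turing's theorem whenever $A$ is non-trivial.

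The main obstacle is ensuring the construction is expressible in \essence{} itself. \essence{} does not syntactically enforce finiteness of find-variable domains --- the paper explicitly notes that \conjure{}'s finiteness check is performed only after type strengthening --- so unbounded integer variables and polynomial arithmetic over them are directly available, which is what makes the Diophantine route go through. If one wished to avoid unbounded integers altogether, the same argument could be carried out by encoding Turing-machine computation traces via nested abstract types (for instance, sequences of tuples recording state and tape contents of unbounded length), leaning on \essence{}'s rich type system. A minor technicality is the degenerate case where no domain at all admits $A$, but then the ``unless'' clause in the theorem statement is satisfied vacuously and nothing remains to prove.
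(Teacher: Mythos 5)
Your proof is correct and shares the paper's overall skeleton --- build a specification whose satisfiability is equivalent to halting, adjoin a variable $V$ whose $A$-attachability flips with satisfiability, and make one oracle call --- but it differs in both of the places where work is actually done. For the undecidable core, the paper directly simulates a Turing machine (with no halting states and a distinguished state $q$) for a number of steps bounded by a parameter $n$, so that the specification is satisfiable for some $n$ iff $q$ is reached; you instead invoke Matijasevich's theorem and post a single Diophantine constraint over unbounded integer find variables. Your route imports heavier machinery but yields a single, unparameterised specification and a genuinely one-shot oracle call, whereas the paper's phrasing (``has a solution for any $n$'') leaves slightly implicit how the parameter is discharged. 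For the variable $V$, the paper leaves it unconstrained and argues that when solutions exist they include \emph{all} assignments to $V$, hence some violating $A$ (this is where the ``unless'' clause enters); you instead pin $V = v^\ast$ to an explicit $A$-violating literal, which makes the ``oracle must say no'' direction immediate at the small cost of assuming the violating value is expressible as an \essence{} literal (true for all \essence{} types, whose values are finite objects). You also correctly anticipate and defuse the one potential objection to your encoding --- finiteness of find-variable domains --- by citing the paper's own remark that the finiteness check happens after strengthening, and your fallback via computation traces over nested types is essentially the paper's construction. Both arguments are sound; neither has a gap.
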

\begin{proof}
To reduce from the Halting problem, consider determining if a Turing machine \(T\), with a distinguished state \(q\) and no halting states, ever reaches \(q\) when started on a blank tape.
This problem is \({\Sigma}^1_0\)-complete.
We construct an \essence specification which takes a parameter \(n\) and has a solution if \(T\) reaches \(q\) within \(n\) steps, since \(T\) cannot reach any part of the tape that is more than \(n\) steps from the starting position.
Detecting if this \essence specification has a solution for any \(n\) is therefore equivalent to solving the halting problem.
Assuming that $A$ is not satisfied by some variable $V$, we now add $V$ to this \essence specification, placing no constraints on the variable \(V\). 
If \(T\) never reaches state \(q\), the problem has no solutions and so we can add any extra attribute to \(V\) without affecting the set of solutions.
If \(T\) does reach state \(q\) after some number of steps \(n\), then the set of solutions includes (possibly many copies of) all assignments to \(V\), so the addition of $A$ will reduce the set of solutions and is invalid.
The supposed oracle therefore solves the halting problem.
\end{proof}

\section{Type Strengthening}

As well as adding attributes to domains, \conjure{} also implements \emph{type strengthening} rules which transform one domain into another, e.g. turning a \texttt{relation} into a \texttt{function} or \texttt{mset} into \texttt{set}.
This provides a greater set of representational choices.

\subsection{Overview}

Our implementation of type strengthening builds directly upon domain strengthening.
We split type strengthening into two parts.
Firstly, for each type \texttt{T} which can be strengthened to another type \texttt{U}, there is an attribute on \texttt{T} which restricts a variable of type \texttt{T} to assignments of type \texttt{U}, and every other attribute on \texttt{T} is also valid on \texttt{U}.
This means detecting type strengthening uses only the domain of a variable. Further, for each type \texttt{T} which can be strengthened into another type \texttt{U}, we have an operator \texttt{toT} which transforms a \texttt{U} back into a \texttt{T}.
This allows us to replace a variable \texttt{T t} with a variable \texttt{U u}, replacing all occurrences of \texttt{t} with \texttt{toT(u)}.
We then use \conjure's standard rewrite engine to simplify the resulting specification.

\subsection{MSet to Set}

The rule below demonstrates transforming a \texttt{mset} with \texttt{(maxOccur 1)} into a \texttt{set}.
When applying this rule every occurrence of \texttt{\&x} in the model is replaced with \texttt{toMSet(\&x)}.
For example, \texttt{freq(\&x,2) = 0} turns into \texttt{freq(toMSet(\&x),2) = 0}.
Later \conjure simplifies this expression to \texttt{!(2 in \&x)} by applying one of its rewrite rules.

\begin{mdframed}
\begin{description}[leftmargin=!,labelwidth=\widthof{\textbf{AAAAA}}]
    \item[Input   ] \lstinline$find &x : mset (maxOccur 1) of &T$
    \item[Output  ] \lstinline$find &x : set of &T$
\end{description}
\end{mdframed}

\subsection{Relation to Function}

We previously discussed in \Cref{ref:relationDS} how we can detect relations which are functional in one or more of their indices.
Such relations can be transformed into functions.
In general these transformations involve arbitrary arity relations with an arbitrary subset of the indices of the relation defining the domain of the function. In this section we will provide some concrete examples for arity 3 relations.

The two rules below show examples of transforming a \texttt{relation} to a partial and total \texttt{function}.
When transforming a \texttt{relation} into a \texttt{function}, we replace all occurrences of the relation \texttt{x} with the expression \texttt{toRelation(x)}.
This leads to specifications like \Cref{fig:relB}, where we use \texttt{toRelation} and project the resulting relation. \conjure simplifies this example to the specification given in \Cref{fig:relC}.

    \begin{mdframed}
    \begin{description}[leftmargin=!,labelwidth=\widthof{\textbf{AAAAA}}]
        \item[Input   ] \lstinline$find &x : relation (functional (1,2)) of (&T_1 * &T_2 * &T_3)$
        \item[Output  ] \lstinline$find &x : function (&T_1 * &T_2) --> &T_3$
    \end{description}
    \end{mdframed}
    \begin{mdframed}
    \begin{description}[leftmargin=!,labelwidth=\widthof{\textbf{AAAAA}}]
        \item[Input   ] \lstinline$find &x : relation (total_functional (1,3)) of (&T_1 * &T_2 * &T_3)$
        \item[Output  ] \lstinline$find &x : function (total) (&T_1 * &T_3) --> &T_2$
    \end{description}
    \end{mdframed}

\section{Conclusions}

We have shown how we can make the powerful and expressive type system of \essence more robust and easier to use. Type and domain strengthening make \essence more useful for beginners who have not yet learnt, or do not wish to learn, the full list of attributes available in \essence. Furthermore, automated type and domain strengthening allows for new attributes to be added to the \essence language which can improve the performance of an \essence specification without the user having to make any changes to their model.

In future work, we plan to extend type and domain strengthening to more general kinds of rewriting of \essence specifications.
For instance, currently we cannot recover the injectivity of \code{f} and \code{g} from

\noindent
\begin{minipage}{\textwidth}
\begin{lstlisting}
forAll i,j . f[i] = j <-> g[j] = i
\end{lstlisting}
\end{minipage}
\newline
via our existing type and domain strengthening rules.

\subsubsection*{Acknowledgements}
This research was supported by the UK EPSRC grants EP/K015745/1 and EP/V027182/1. Chris Jefferson is a University Research Fellow funded by the Royal Society.

\bibliography{new}
\bibliographystyle{splncs04}

\end{document}